\documentclass[runningheads]{llncs}
\usepackage{geometry}
\usepackage[T1]{fontenc}
\usepackage{graphicx}
\usepackage{amsmath}
\usepackage{amssymb}
\usepackage{esvect}
\usepackage{bm}
\usepackage{url}
\usepackage{hyperref}
\usepackage{xcolor}
\usepackage{orcidlink}

\renewcommand{\orcidID}[1]{\textcolor{green!50!black}{\orcidlink{#1}}}

\begin{document}

\title{Designing Sustainable Federated Learning as a Service using Neural Architecture Search}

\titlerunning{Sustainable FLaaS using NAS}

\author{
Keya Patel\inst{1}\orcidID{0000-0003-3987-9828}\and
Sajib Mistry\inst{1}\orcidID{0000-0001-7513-3789}\and
Sheik Fattah\inst{1}\orcidID{0000-0002-9103-6089}\and
Deepak Kanneganti\inst{1}\orcidID{0009-0007-5860-2644}\and
Aneesh Krishna\inst{1}\orcidID{0000-0001-8637-5732}\and
Mufti Mahmud\inst{2}\and
Monowar Bhuyan\inst{3}\orcidID{0000-0002-9842-7840}
}

\authorrunning{K. Patel et al.}

\institute{
School of Electrical Engineering, Computing and Mathematical Sciences\\
Curtin University, Perth, Australia\\
\email{k.patel38@postgrad.curtin.edu.au}\\
\email{\{sajib.mistry, sheik.fattah, s.kanneganti, a.krishna\}@curtin.edu.au}
\and
King Fahd University of Petroleum and Minerals, Saudi Arabia\\
\email{mufti.mahmud@kfupm.edu.sa}
\and
Ume\r{a} University, Ume\r{a}, SE-90187, Sweden\\
\email{monowar@cs.umu.se}
\vspace{-7mm}
}

\maketitle

\begin{abstract}
The sustainability constraints of FLaaS consumers pose significant challenges to maintaining carbon-feasible federated training in FLaaS environments. These constraints often lead to infeasible consumer participation and unstable federated training under hard carbon constraints. We propose a \textit{Sustainable Federated Learning as a Service (SFLaaS)}, a carbon-constrained \textit{Neural Architecture Search (NAS)} framework for heterogeneous sustainable constraints. We introduce a requirement-driven search space that transforms consumer sustainability profiles into a feasible architecture region before federated execution. We develop a consumer-level carbon feasibility estimation mechanism to evaluate candidate architectures under dynamic carbon conditions. We propose a sustainable consumer scheduling strategy that adaptively selects feasible consumers and allocates local workloads to preserve consumer participation and statistical data coverage. An evolutionary search strategy jointly optimised for predictive performance, consumer feasibility, and participation coverage under hard carbon constraints. Experiments on real-world datasets and a simulated environment demonstrate the effectiveness of the proposed approach.

\keywords{Federated Learning as a Service  \and Green AI \and Neural Architecture Search \and Sustainable ML \and Carbon-aware Computing.}
\end{abstract}
\section{Introduction}
Machine learning (ML) has been increasingly adopted in domains such as healthcare, finance, and smart cities to enable intelligent applications and data-driven decision-making. However, centralised data collection and processing often introduce privacy and security concerns \cite{wen2023survey}. Federated Learning (FL) addresses this challenge by allowing multiple participants to train a shared model without moving raw data off-site. Each participant performs local training on its own data and sends model updates to a coordinating server, which aggregates them into a global model over multiple communication rounds \cite{gao2024flaas}.

FL is increasingly being offered as a cloud-based service, giving rise to Federated Learning as a Service (FLaaS) \cite{kourtellis2020flaas}. In FLaaS, the provider manages federated training across distributed organisations. As illustrated in Fig.~\ref{fig:Generic FLaaS}, the provider manages the overall learning process, including service orchestration, resource management, consumer scheduling, workload allocation, and model aggregation. Service consumers participate in FLaaS training using their private datasets to gain knowledge and collaboratively improve prediction performance.
\begin{figure}
\vspace{-5mm}
\includegraphics[width=\textwidth]{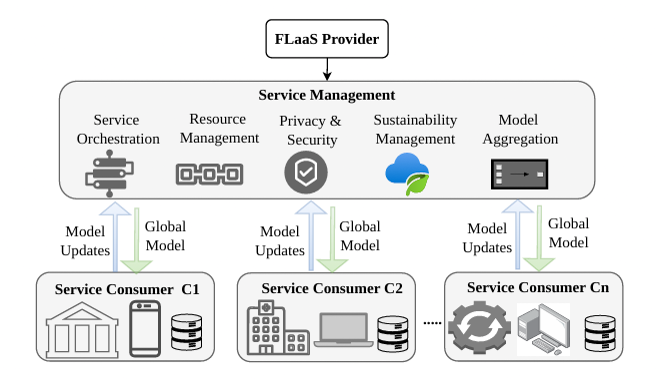}
\caption{An FLaaS System Architecture} 
\label{fig:Generic FLaaS}
\vspace{-5mm}
\end{figure}

FLaaS enables consumers to adopt FL without centrally collecting sensitive data from distributed environments. However, consumers in FLaaS often operate under different sustainability constraints, such as carbon budgets that limit their participation in federated training \cite{albelaihi2022green}. Such constraints are important because participating organisations may operate under energy budgets, carbon-emission policies, and green computing objectives aimed at reducing operational costs and satisfying sustainability requirements. Since federated training requires repeated local computation and frequent communication, participation in FLaaS significantly increases consumer-side energy consumption and carbon emissions. Sustainability constraints are influenced by factors such as computational capabilities, local dataset size, time-varying carbon emissions, and carbon budgets. Consequently, maintaining feasible and sustainable consumer participation creates a significant challenge in the FLaaS environment. Therefore, our focus is to design a \textit{sustainable FLaaS} that enables consumers to participate effectively under sustainability constraints.

The model architecture plays a critical role in determining consumer feasibility, which refers to the ability of consumers to participate in FL training while satisfying their sustainability constraints. {\textit{Neural Architecture Search (NAS)}} provides an opportunity to automatically design model architectures \cite{salmani2025systematic}. By exploring architectures with varying computational and communication costs, NAS identifies architectures that achieve high predictive performance while ensuring consumer feasibility within sustainability constraints.

Existing Green FL studies reduce energy and carbon emissions using techniques such as client selection, compression, and carbon-aware scheduling \cite{albelaihi2022green}, \cite{li2021fedgreen}, \cite{arputharaj2025green}. Similarly, Green and carbon-aware NAS methods optimise architecture design to reduce model complexity, communication costs, and energy consumption \cite{xu2021knas}, \cite{zhou2020econas}. However, these approaches neither consider hard carbon budgets nor ensure architectural feasibility for sustainability constraints. To the best of our knowledge, sustainability constraints remain largely unexplored in both FLaaS design and in neural architecture optimisation. Therefore, \textit{we investigate how NAS can be leveraged to design a sustainable FLaaS service that enhances consumer participation while maintaining learning quality}. We identify three \textit{key challenges} in applying NAS to design a sustainable FLaaS.

The primary challenge is \textit{service architecture design}, where an architecture remains feasible across consumers. Larger architectures require more computation per local epoch, increase the size of model updates, and therefore raise the carbon cost imposed on consumers. Consequently, an architecture that is feasible for high-resource consumers may become infeasible for consumers under stricter sustainability constraints. \textit{Consumer feasibility} under time-varying conditions is another challenge. Repeated local training gradually depletes consumer carbon budgets, while time-varying grid carbon intensity changes the carbon cost of participation. As a result, consumers feasible in early communication rounds may become infeasible later, leading to consumer dropout and reduced service continuity. The third challenge is maintaining a \textit{participation coverage} under dynamic sustainability conditions. For example, consumers in similar geographic regions may simultaneously experience high grid carbon intensity, making participation in the same round infeasible. Such correlated dropouts reduce the diversity of training data and increase workload imbalance, negatively affecting model convergence and generalisation. To address these challenges, we formulate NAS as a sustainability optimisation problem in an FLaaS environment. The key contributions of this paper are as follows.
\vspace{-2mm}
\begin{itemize}
   \item We design a requirement-driven macro NAS framework that integrates consumer sustainability constraints into architecture feasibility estimation.
    \item We develop a \textit{Consumer-Level Carbon Feasibility Estimation} mechanism that determines whether consumers can participate under candidate architectures and workload configurations while satisfying carbon constraints.
    \item We propose a \textit{Sustainable Consumer Scheduling} mechanism that preserves consumer participation under dynamic carbon constraints.
    \item We introduce an \textit{Evolutionary Search Strategy} that optimises performance, individual consumer feasibility, and participation coverage in FLaaS.
\end{itemize}

\section{Related Work}
FLaaS extends traditional FL by providing model orchestration, aggregation, and lifecycle management as cloud-based services. Early FLaaS enables collaborative model training across third-party applications while preserving privacy and enabling scalable service deployment \cite{gao2024flaas}, \cite{kourtellis2020flaas}. TruFLaaS further enhanced service reliability through trust management and secure aggregation for industrial IoT applications \cite{mazzocca2023truflaas}. More recently, hierarchical FLaaS frameworks have been proposed for edge-cloud environments to support heterogeneous models across resource-constrained IoT devices and cloud servers \cite{gao2024flaas}. NebulaFL further extended FLaaS to decentralised multi-cloud systems using asynchronous training and resource-aware scheduling to improve scalability and reduce communication overhead \cite{gao2024nebulafl}. However, these studies mainly focus on orchestration and scalability rather than sustainable optimisation under consumer-level carbon constraints. Green FL aims to reduce the environmental cost of distributed training through energy-efficient scheduling and communication optimisation. FedCS introduced a client selection strategy based on computational capability and communication latency \cite{nishio2019client}, while Oort improved participant selection using statistical utility and runtime performance \cite{lai2021oort}. GREED later proposed an energy-aware client selection mechanism that explicitly considers energy consumption and communication cost during training \cite{albelaihi2022green}. FedGreen additionally reduced communication overhead through fine-grained gradient compression for mobile edge environments \cite{li2021fedgreen}. Although these methods improve training efficiency, they primarily optimise execution-level behaviour, leaving architecture-level sustainability largely unexplored.

NAS has evolved towards hardware-aware and energy-aware optimisation by incorporating latency and energy efficiency into architecture design. DARTS introduced differentiable architecture search for efficient gradient-based optimisation \cite{liu2019darts}, while ProxylessNAS, MnasNet, and FBNet integrated hardware-aware latency optimisation for mobile and embedded platforms \cite{cai2019proxylessnas}, \cite{tan2019mnasnet},\cite{wu2019fbnet}. More recently, KNAS and EA-HAS-Bench demonstrated that architecture design and hyperparameter selection significantly affect training energy consumption \cite{xu2021knas}, \cite{dou2023ea}. CarbonTracker and EcoNAS further incorporated carbon and energy measurements into model optimisation and architecture search \cite{anthony2020carbontracker},\cite{zhou2020econas}. However, these approaches are designed for centralised infrastructures and do not consider geographically distributed participants, diverse hardware capabilities, or consumer-level carbon constraints. FedNAS integrated differentiable NAS into FL to optimise architectures across distributed consumers \cite{he2021fednas}. Nevertheless, existing federated NAS approaches primarily focus on predictive accuracy and communication efficiency rather than sustainable service orchestration. Consequently, limited work considers sustainable architecture search, consumer feasibility, and carbon constraints in FLaaS.

\section{Motivation Scenario}
Let us consider an FLaaS provider delivering a cancer detection service across geographically distributed hospital networks in Sydney ($c_1$), Singapore ($c_2$), Berlin ($c_3$), and Toronto ($c_4$). Each hospital has private medical imaging data collected from local patients. However, hospitals may not have sufficient data diversity to train an accurate cancer detection model. Therefore, hospitals participate in the FLaaS to gain knowledge from distributed medical data while preserving patient privacy. In cancer detection services, insufficient predictive accuracy may reduce diagnostic reliability and affect clinical decision-making. Therefore, the provider is required to achieve at least 85\% classification accuracy within 20 communication rounds while maintaining feasible consumer participation. However, the participating hospitals operate under sustainability constraints as shown in Table~\ref{tab:consumer_profiles}.

Designing model architectures that remain feasible across heterogeneous consumers is challenging. Let us consider the cancer detection service, where the provider evaluates large architecture, $\alpha_L$ with $\text{Cost}(\alpha_L)=4.1$ GFLOPs/epoch and $|\alpha_L|=25$M parameters, achieving 87\% accuracy. The small architecture, $\alpha_S$ with $\text{Cost}(\alpha_S)=0.6$ GFLOPs/epoch and $|\alpha_S|=4$M parameters, achieving 83\% accuracy. Although $\alpha_L$ provides higher accuracy, its high computational and communication costs may make it infeasible for hospitals under hard carbon budgets.

Maintaining consumer feasibility under dynamic conditions is another key challenge. Suppose hospital $c_2$ in Singapore is constrained to a maximum carbon emission of $0.70$ gCO$_2$. When $\alpha_L$ is trained with five local epochs, the resulting carbon cost for $c_2$ becomes approximately $0.84$ gCO$_2$, exceeding the allowable limit. Consequently, hospital $c_2$, which initially has sufficient carbon budget to participate, later becomes infeasible. The FLaaS provider may reduce the local epoch assignment for $c_2$ to maintain participation. However, repeated workload adjustments across communication rounds can create training imbalance.

Another challenge is in maintaining participation coverage across consumers. Suppose multiple hospitals in similar geographic regions simultaneously experience periods of high electricity-related carbon emissions. Under architecture $\alpha_L$, several hospitals may become infeasible within the same communication round. The FLaaS provider may exclude infeasible consumers, such as hospital $c_2$, to maintain training continuity. However, excluding $c_2$ removes 8,000 medical images from a distinct patient subpopulation, thereby reducing the diversity of the training data and negatively affecting the performance of FLaaS. Therefore, the FLaaS provider requires a sustainable NAS strategy that balances model performance, consumer feasibility, and stable participation under sustainability constraints.
\begin{table}[t]
\centering
\caption{Consumer constraints in the motivating example.}
\label{tab:consumer_profiles}
\begin{tabular}{lcccc}
\hline
\textbf{Consumer} & $\boldsymbol{\eta_i}$ (GFLOPs/J) & $\boldsymbol{CI_i}$ (gCO$_2$/kWh) & $\boldsymbol{B_i}$ (gCO$_2$) & $\boldsymbol{|D_i|}$ (images) \\
\hline
$c_1$ Sydney    & 2.1 & 420 & 800  & 12,000 \\
$c_2$ Singapore & 3.4 & 310 & 1500 & 8,000  \\
$c_3$ Berlin    & 4.8 & 180 & 2000 & 15,000 \\
$c_4$ Toronto   & 2.6 & 250 & 1200 & 10,000 \\
\hline
\end{tabular}
\end{table}

\section{Problem Formulation}
We consider a FLaaS system with one provider $P$ and a set of $N$ distributed consumers $\mathcal{C}=\{c_1,\ldots,c_N\}$, where each consumer $c_i$ holds a private local dataset $D_i$ of size $|D_i|$. The provider controls the overall service orchestration, including architecture selection, consumer participation, workload assignment, and aggregation strategy. We represent the federated service configuration as
\begin{equation}
\Omega=
\left(
\alpha,
\{S_t\}_{t=1}^{T},
\{E_i^t\}_{i\in S_t,t\le T},
Agg,
T
\right),
\end{equation}
where $(\alpha \in \mathcal{A})$ denotes the neural architecture selected from the search space $\mathcal{A}$, $(S_t \subseteq \mathcal{C})$ denotes the subset of participating consumers in round $t$, $(E_i^t \in \mathbb{Z}_{>0})$ denotes the number of local training epochs assigned to consumer $c_i$, and $\mathrm{Agg}(\cdot)$ denotes the aggregation operator. A consumer $c_i$ is characterised by heterogeneous sustainability and system profiles, including hardware efficiency $(\eta_i)$, communication profile $(\rho_i)$, time-varying carbon intensity $(CI_i(t))$, and carbon budget $(B_i)$. $(\eta_i)$ is measured in FLOPs/joule, $(\rho_i)$ captures communication overhead, $(CI_i(t))$ denotes the carbon intensity of the local power grid at time (t), and $(B_i)$ represents the total carbon budget available to the consumer during service execution. We assume that consumers follow the assigned training procedure, and the provider has access to the consumer's profile. Carbon budgets are treated as hard constraints. The carbon cost is modelled as the sum of computation and communication costs. The carbon cost incurred by consumer $c_i$ under architecture $\alpha$ and  $E_i^t$ at $t$ as
\vspace{-2mm}
\begin{equation}
Carbon_i^t(\alpha,E_i^t)
=
\frac{E_i^t\cdot Cost(\alpha)}{\eta_i}\cdot CI_i(t)
+
Comm(|\alpha|,\rho_i)\cdot CI_i(t)
\label{eq:carbon}
\end{equation}
where $Cost(\alpha)$ denotes the computation required for one local epoch and $|\alpha|$ denotes the model size, which determines the update transmission volume. The first term captures computation-related emissions, while the second captures communication-related emissions. Let $b_i(t)$ denote the remaining carbon budget of consumer $i$ at the start of round $t$. Its budget evolves as
\vspace{-2mm}
\begin{equation}
b_i(t+1)
=
b_i(t)
-
Carbon_i^t(\alpha,E_i^t)\cdot \mathbf{1}[i\in S_t], b_i(1)=B_i
\end{equation}
A consumer is feasible if its assigned workload does not exceed its remaining budget. The carbon-feasible participation of consumer $c_i$ under architecture $\alpha$ as
\vspace{-2mm}
\begin{equation}
\Gamma_i(\alpha)
=
\left\{
(t, E):
Carbon_i^t(\alpha,E)\le b_i(t)
\right\}
\end{equation}
The objective of the provider is to optimise the architecture and service orchestration to maximise the quality of the service model while ensuring carbon-feasible participation throughout training. The optimisation problem is defined as
\vspace{-2mm}
\begin{equation}
\begin{aligned}
\max_{\alpha\in\mathcal{A},\{S_t\},\{E_i^t\},Agg,T}
\quad & F(w_T(\alpha)) \\
\textrm{subject to}\quad
& F(w_T(\alpha))\ge \tau_{acc},\\
& T\le \tau_{deadline}, \\
& (t,E_i^t)\in\Gamma_i(\alpha),
\forall i\in S_t,\\
&|S_t|\ge K_{min} \forall t. \\
\end{aligned}
\label{eq:optimization}
\end{equation}
Here, $F(w_T(\alpha))$ denotes the final validation accuracy, $\tau_{acc}$ is the minimum accuracy, $\tau_{deadline}$ is the maximum allowable communication rounds, and $K_{min}$ is the minimum number of participating consumers for stable aggregation.

\section{Proposed SFLaaS-NAS Framework}
The proposed framework \textbf{SFLaaS-NAS}(\textit{\textbf{S}ustainable \textbf{F}ederated \textbf{L}earning \textbf{a}s \textbf{a} \textbf{S}ervice via \textbf{N}eural \textbf{A}rchitecture \textbf{S}earch}) consists of four components, as illustrated in Fig.~\ref{fig:framework}. First, a \textit{requirement-driven search space} transforms heterogeneous consumer sustainability constraints into a feasible architecture space. Second, a \textit{consumer-level carbon feasibility estimation} assesses whether individual consumers can implement a candidate architecture without exceeding their remaining carbon budgets. Third, a \textit{dynamic carbon-aware consumer
scheduling} selects sustainable participants and allocates local workloads based on carbon availability, data contribution, and computational efficiency. Finally, an \textit{evolutionary optimisation} strategy evaluates candidate architectures through communication rounds and iteratively
searches for architectures that maximise predictive performance, participation coverage, and carbon sustainability. We formulate SFLaaS-NAS using the following notations and definitions in Table~\ref{tab:notation}.
\vspace{-3mm}
\begin{figure}
\includegraphics[width=\textwidth]{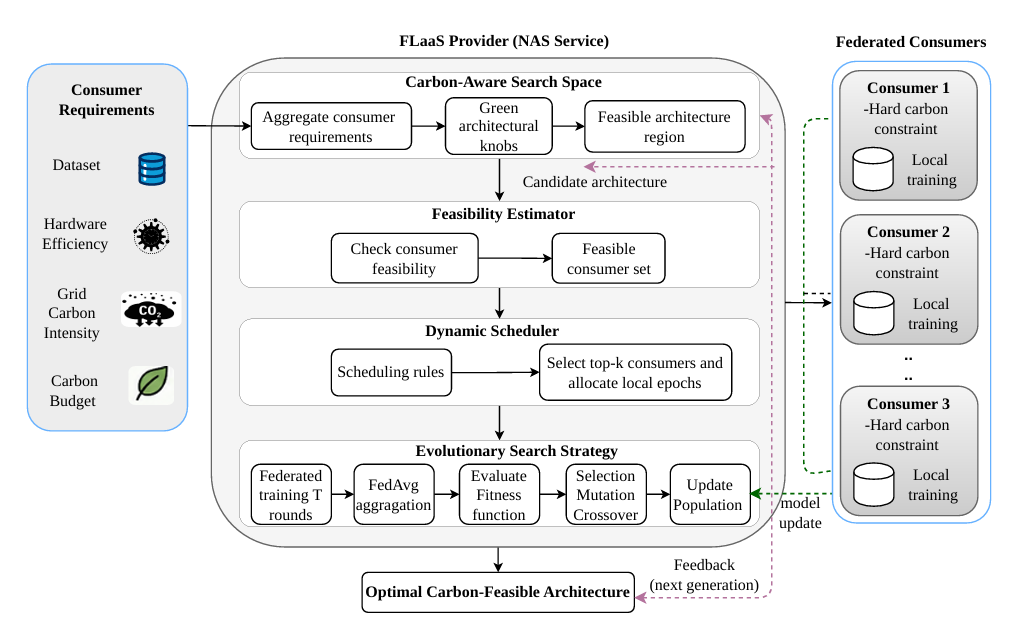}
\caption{An Overview of SFLaaS-NAS Framework} 
\label{fig:framework}
\vspace{-3mm}
\end{figure}

\begin{table}[t]
\centering
\caption{Notation and Description}
\label{tab:notation}
\footnotesize
\renewcommand{\arraystretch}{0.85}
\setlength{\tabcolsep}{4pt}
\begin{tabular}{ll}
\hline
\textbf{Notation} & \textbf{Description} \\
\hline
$\alpha \in \mathcal{A}$ & Candidate architecture \\
$R^{*}$ & Unified sustainability profile \\
$\mathcal{A}_{\mathrm{feasible}}$ & Feasible architecture set \\
$b_i(t)$ & Remaining carbon budget of consumer $i$ \\
$\mathrm{Carbon}_i^t(\alpha,E_i^t)$ & Consumer-side carbon cost \\
$\Gamma_i(\alpha)$ & Carbon-feasible participation region \\
$\phi_i(\alpha,t,E_i^t)$ & Feasibility indicator \\
$C_t^\alpha$ & Feasible consumer set \\
$S_t^\alpha$ & Scheduled consumer set \\
$Q_i^t(\alpha)$ & Scheduling score \\
$P_g$ & Population in generation $g$ \\
$\mathrm{Fitness}(\alpha)$ & Architecture fitness \\
$t$ & Communication round\\
\hline
\end{tabular}
\vspace{-2mm}
\end{table}

\subsection{Requirement-Driven Search Space}
We design a macro search space that explicitly exposes the architectural factors governing consumer-side carbon costs. To address the service architecture design challenge, consumer feasibility depends not only on performance but also on whether the selected architecture can be executed across heterogeneous constraints. Therefore, rather than searching over an arbitrary network configuration, we parameterise the architecture space using a set of carbon-aware green knobs whose impact on execution cost can be directly quantified. Each candidate architecture is represented by a set of green architectural knobs $\alpha =\{d,w,k,o,s,p\}\in\mathcal{A}$, where \textit{depth ($d$)} controls the number of computational blocks, \textit{width ($w$)} controls channels expansion,\textit{ kernel size ($k$)} controls the convolutional receptive field, \textit{operator type ($o$)} selects computational primitives such as standard and depthwise convolution with different energy efficiency, \textit{sparsity ($s$)} is the fraction of inactive weights, reducing active computations, \textit{precision ($p$)} defines numerical representation, affecting arithmetic energy, memory traffic, and communication cost. For $\alpha$, the provider estimates its computational complexity and parameter footprint $F(\alpha)=\text{FLOPs}(\alpha)$, and $P(\alpha)=\text{Params}(\alpha)$ to estimate both the local computation cost $\mathrm{Cost}(\alpha)$ and communication cost $\mathrm{Comm}(|\alpha|,\rho_i)$ required by the carbon model (eq. \ref{eq:carbon}), where $F(\alpha)$ represents the total floating-point operations required for one forward-backward pass, and $P(\alpha)$ denotes the total number of trainable
parameters. These determine the $\mathrm{Carbon}_i^t(\alpha, E_i^t)$ and communication cost in the carbon model. The search space $\mathcal {A} $ contains architectures with diverse predictive capacities; however, many candidate architectures may be inherently infeasible under heterogeneous sustainability constraints. To avoid unnecessary exploration of such architectures, we transform sustainability constraints into a constrained search region. Given the consumer set $\mathcal C=\{c_1,c_2,\ldots,c_N\}$,  the provider constructs a $R^*=(\eta^*,CI^*(t),B^*)$. The aggregated hardware efficiency is computed using the harmonic mean as $\eta^{*}=\left({1}/{N}\sum_{i=1}^{N}{1}/{\eta_i}\right)^{-1}$. The aggregated carbon intensity is computed as the dataset-weighted average as $CI^{*}(t)={\sum_{i=1}^{N}|D_i|CI_i(t)}/{\sum_{i=1}^{N}|D_i|}$. The carbon budget $B^{*}=\operatorname{Percentile} (\{B_i\}_{i=1}^{N},25)$. For $\alpha$, the provider
estimates the carbon cost under $R^*$ as $dCarbon(\alpha, R^*)$, which captures both computational and communication emissions under aggregated consumer conditions. Increasing network depth and width increases both
computational complexity and parameter count,
which, in turn, increases training and communication costs. Architectures exceeding the carbon budget are excluded, while sustainable architectures are retained for further search. The feasible architecture region defined in Eq.~\eqref{eq:feasible_search_space}, reducing the search space by eliminating architectures that violate sustainability constraints and focusing optimisation on sustainable architectures.
\vspace{-2mm}
\begin{equation}
\mathcal A_{feasible}
=
\left\{
\alpha\in\mathcal A
\;\middle|\;
dCarbon(\alpha,R^*)\le B^*
\right\}
\label{eq:feasible_search_space}
\end{equation}

\subsection{Consumer-Level Carbon Feasibility Estimation}
To maintain consumer feasibility, we evaluate whether a consumer can participate in $\alpha$, since architecture filtering does not guarantee that a consumer remains feasible throughout training. Consider a $\alpha\in\mathcal P^g$, where $g$ denotes the evolutionary generation. Suppose consumer $c_i$ is assigned $E_i^t$ local training epochs during $t$. The total computational workload of consumer $c_i$ is defined as
\vspace{-2mm}
\begin{equation}
W_i^t(\alpha)
=
F(\alpha)\cdot |D_i|\cdot E_i^t
\label{eq:workload}
\end{equation}
where $F(\alpha)$ denotes the per-sample computational complexity of $\alpha$, $|D_i|$ defines local dataset size, and $E_i^t$ define assigned local epochs. Given hardware efficiency $\eta_i$, the local training energy computed as $E_i^{train}(\alpha,t)={W_i^t(\alpha)}/{\eta_i}$. The communication energy is estimated as $E_i^{comm}(\alpha,t)=\rho_iP(\alpha)$, where $\rho_i$ denotes the communication energy coefficient and $P(\alpha)$ denotes the model parameter size. The total carbon cost during communication round $t$ is influenced by the time-varying carbon intensity $CI_i(t)$ of the consumer $c_i$, which is defined as
\vspace{-2mm}
\begin{equation}
\mathrm{Carbon}_i^t(\alpha,E_i^t)=
\left(
\frac{E_i^t \cdot \mathrm{Cost}(\alpha)}{\eta_i}
+
E_i^{\mathrm{comm}}(\alpha,\rho_i)
\right)
\frac{CI_i(t)}{3.6 \times 10^6}
\end{equation}

\begin{theorem}[Monotonic Carbon Cost and Consumer Feasibility]
For consumer $c_i$ and $\alpha$, $Carbon_i^t(\alpha,E_i^t)$ increases monotonically with workload $W_i^t(\alpha)$. Consumer $c_i$ remains feasible iff $(t,E_i^t)\in \Gamma_i(\alpha)\quad \Longleftrightarrow \quad Carbon_i^t(\alpha,E_i^t)\le b_i(t)$. If $(t,E_i^t)\notin \Gamma_i(\alpha)$, then $\phi_i(\alpha,t,E_i^t)=0$.
\end{theorem}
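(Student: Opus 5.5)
The plan is to prove the three assertions in order: monotonicity first, then the feasibility equivalence, then the indicator claim.

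\emph{Monotonicity.} I would rewrite the carbon model so that the dependence on the workload is explicit. The computation term $E_i^t\cdot \mathrm{Cost}(\alpha)/\eta_i$ is the local training energy $E_i^{train}(\alpha,t)=W_i^t(\alpha)/\eta_i$, with $W_i^t(\alpha)=F(\alpha)\,|D_i|\,E_i^t$ as in Eq.~\eqref{eq:workload}. This identification takes $\mathrm{Cost}(\alpha)$ to be the per-epoch cost $F(\alpha)|D_i|$, which is how the paper uses it. Hence
\[
\mathrm{Carbon}_i^t(\alpha,E_i^t)=\frac{CI_i(t)}{3.6\times 10^6}\left(\frac{W_i^t(\alpha)}{\eta_i}+\rho_i P(\alpha)\right).
\]
For fixed $c_i$, $\alpha$ and $t$, this is an affine function of $W_i^t(\alpha)$. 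Its slope is $CI_i(t)/(3.6\times 10^6\,\eta_i)$, and its intercept is the communication term, which does not depend on $E_i^t$. Since $\eta_i>0$ and $CI_i(t)\ge 0$, the slope is nonnegative, so the carbon cost is nondecreasing in $W_i^t$. If $CI_i(t)>0$, the slope is positive and the cost is strictly increasing. Because $W_i^t$ is itself strictly increasing in $E_i^t$ (as $F(\alpha)|D_i|>0$), the carbon cost is also monotone in the assigned epochs.

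\emph{Feasibility equivalence.} This follows by unfolding the definition of $\Gamma_i(\alpha)$: $(t,E)\in\Gamma_i(\alpha)$ holds exactly when $\mathrm{Carbon}_i^t(\alpha,E)\le b_i(t)$. I would add the consequence that makes the statement meaningful. By monotonicity, for each round $t$ the feasible epochs form a down-closed set $\{E : E\le E_{\max}(t)\}$, where
\[
E_{\max}(t)=\left\lfloor \frac{\eta_i\left(3.6\times10^6\, b_i(t)/CI_i(t)-\rho_iP(\alpha)\right)}{F(\alpha)|D_i|}\right\rfloor .
\]
So reducing the workload preserves feasibility. In addition, $b_i(t)$ is nonincreasing in $t$ by the budget recursion, since each subtracted carbon term is nonnegative. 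Consequently, a consumer that is infeasible at a given workload remains infeasible at any larger workload in that round.

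\emph{Indicator claim.} The statement uses $\phi_i(\alpha,t,E_i^t)$ without a formal definition at this point. I would fix the natural definition $\phi_i(\alpha,t,E)=\mathbf{1}[(t,E)\in\Gamma_i(\alpha)]$. The claim that $\phi_i=0$ whenever $(t,E_i^t)\notin\Gamma_i(\alpha)$ is then immediate.

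The main obstacle is not mathematical but notational. The proof relies on reconciling $\mathrm{Cost}(\alpha)$ with $F(\alpha)|D_i|$, and on taking the definition of $\phi_i$ as the indicator of $\Gamma_i(\alpha)$. I would state both as explicit conventions. I would also note that strict monotonicity requires $CI_i(t)>0$; when $CI_i(t)=0$ the cost is constant and only weak monotonicity holds.
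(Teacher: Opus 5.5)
Your proof is correct and follows the paper's. Monotonicity comes from the nonnegative coefficient $CI_i(t)/(3.6\times 10^6\,\eta_i)$ on the workload term. You sharpen this by writing the cost as an affine function of $W_i^t(\alpha)$ for fixed $\alpha$, by stating the identification $\mathrm{Cost}(\alpha)=F(\alpha)|D_i|$ explicitly, and by noting that strictness needs $CI_i(t)>0$. The equivalence and the claim $\phi_i=0$ follow, as in the paper, by unfolding the definitions of $\Gamma_i(\alpha)$ and $\phi_i$.

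One small slip sits in your optional down-closure remark. The within-round conclusion follows from monotonicity in $E$, not from $b_i(t)$ being nonincreasing in $t$. The latter alone would not show that infeasibility persists across rounds, since $CI_i(t)$ varies.
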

\begin{proof}
From Eq.~\eqref{eq:workload}, $W_i^t(\alpha)$ depends on
architecture complexity, local data volume and epochs. Since the $\eta_i$ is fixed for $c_i$ and $E_i^{\mathrm{comm}}(\alpha,t)\geq0$, the total carbon cost increases monotonically with workload. Since $CI_i(t)>0$, the carbon cost preserves monotonically with workload, and is defined as
\vspace{-2mm}
\begin{equation}
\frac{\partial Carbon_i^t(\alpha,E_i^t)}
{\partial W_i^t(\alpha)} \ge 0.
\end{equation}
By definition Eq.~\eqref{eq:carbon}, $c_i$ is feasible if and only if $(t,E_i^t)\in\Gamma_i(\alpha)$, which is equivalent to $Carbon_i^t(\alpha,E_i^t)\le b_i(t)$. Therefore, $\phi_i(\alpha,t,E_i^t)=1$ when the carbon constraint is satisfied, and $\phi_i(\alpha,t,E_i^t)=0$ otherwise.
\end{proof}
A consumer is feasible at $t$ if its assigned workload belongs to the feasible participation region. The \textit{feasibility indicator} is, $\phi_i(\alpha,t,E_i^t)= \mathbf{1}\left[(t, E_i^t)\in\Gamma_i(\alpha) \right]$. The feasible consumer set \textit{$C_t^\alpha$} in $\alpha$ during $t$ is defined as $\mathcal C_t^\alpha=\left\{c_i\in\mathcal C\;\middle|\;\phi_i(\alpha,t,E_i^t)=1\right\}$.

\subsection{Sustainable Consumer Scheduling}
To preserve participation under dynamic sustainability constraints, the provider must select sustainable consumers across communication rounds. Although all consumers in $\mathcal {C} _ {t} ^\alpha$ satisfy individual carbon-feasibility constraints, selecting all feasible consumers may rapidly deplete remaining carbon budgets, increase communication overhead, and amplify statistical heterogeneity. We implement a \textit{dynamic carbon-aware scheduler} that prioritises sustainability, statistical contribution, and carbon-efficient computation. For $\alpha$, the provider selects a scheduled consumer subset $S_t^\alpha\subseteq\mathcal C_t^\alpha$ based on three scheduling components.

\vspace{0.5em}
\noindent
\textbf{a) Carbon Availability Measurement (CAM):} CAM evaluates the remaining carbon budget, $b_i(t)$ of consumer $c_i$. Consumers with larger $b_i(t)$ are more likely to sustain participation without premature dropout, where the CAM is defined as
\vspace{-2mm}
\begin{equation}
\psi_i^t={b_i(t)}/{B_i}
\end{equation}

\vspace{0.5em}
\noindent
\textbf{b) Data Contribution Measurement (DCM):} DCM evaluates the contribution of consumer $c_i$ to the data. Consumers with higher values make a greater statistical contribution to global data coverage. The DCM score is defined as
\vspace{-2mm}
\begin{equation}
\delta_i={|D_i|}/{\sum_j |D_j|}
\end{equation}

\vspace{0.4em}
\noindent
\textbf{c) Carbon-Efficient Compute Measurement (CCM):} CCM evaluates the consumer $c_i$ under the computational efficiency of their current regional carbon conditions. Consumers with efficient hardware and low carbon intensity are preferred, which is defined as
\vspace{-2mm}
\begin{equation}
\xi_i^t={\eta_i}/{CI_i(t)}
\end{equation}
The unified scheduling score is computed as $Q_i^t(\alpha)=\lambda_1\psi_i^t+\lambda_2\delta_i+\lambda_3\xi_i^t$, where $\lambda_1$, $\lambda_2$, and $\lambda_3$ denotes scheduling weight. Consumers with larger remaining carbon budgets, greater data contribution, and more carbon-efficient hardware receive higher scheduling priority. To preserve aggregation stability, the provider selects the highest-ranked feasible participants s.t the minimum aggregation requirement, $S_t^\alpha= \operatorname{TopK}\left(\mathcal C_t^\alpha, Q_i^t(\alpha), K_t \right)$ where the round-wise consumer participation is denoted as $K_t= \min \left(|\mathcal C_t^\alpha|, K_{target}\right), K_t\ge K_{min}$. After that, local training workloads are allocated adaptively as shown in Eq.~\eqref{eq:adaptive_epochs}. If the assigned workload violates the consumer-side feasibility constraint, $(t, E_i^t)\notin\Gamma_i(\alpha)$. The provider iteratively reduces $E_i^t$ until feasibility is restored.
\vspace{-2mm}
\begin{equation}
E_i^t(\alpha)
=
\max
\left(
1,
\left\lfloor
E_{\max}
\frac{
b_i(t)
}{
B_i
}
\frac{
\eta_i
}{
\max_j\eta_j
}
\right\rfloor
\right)
\label{eq:adaptive_epochs}
\end{equation}

\begin{lemma}[Conditional Participation Stability]
Let us assume that every selected consumer satisfies $Carbon_i^t(\alpha, E_i^t)\le \lambda b_i(t),\quad0<\lambda<1$, then the number of scheduled consumers converges to a non-zero stable participation level as $\lim_{t\rightarrow\infty} |S_t^\alpha|=K^*>0$.
\end{lemma}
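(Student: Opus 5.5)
The plan is to reduce the lemma to the feasibility characterisation of the Theorem (Monotonic Carbon Cost and Consumer Feasibility) together with the budget recursion, and then read off the participation level from the scheduling rule $K_t=\min(|\mathcal C_t^\alpha|,K_{target})$ with $K_t\ge K_{min}$.

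\textbf{Step 1 (budgets stay positive).} Take a consumer $c_i$ covered by the hypothesis, i.e.\ one that is scheduled and satisfies $Carbon_i^t(\alpha,E_i^t)\le\lambda b_i(t)$ in each round it takes part in. The budget recursion gives
\begin{equation*}
b_i(t+1)\ \ge\ b_i(t)-\lambda b_i(t)=(1-\lambda)\,b_i(t).
\end{equation*}
This holds whenever $c_i\in S_t^\alpha$. If $c_i\notin S_t^\alpha$, then $b_i(t+1)=b_i(t)$. By induction, $b_i(t)\ge(1-\lambda)^{t-1}B_i>0$ for every $t$. So no covered consumer ever exhausts its budget, and the dropout mechanism behind the consumer-feasibility challenge is ruled out.

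\textbf{Step 2 (feasibility persists).} Since $\lambda<1$, the hypothesis gives $Carbon_i^t(\alpha,E_i^t)\le\lambda b_i(t)<b_i(t)$. By the Theorem this means $(t,E_i^t)\in\Gamma_i(\alpha)$ and $\phi_i(\alpha,t,E_i^t)=1$. Hence every covered consumer lies in $\mathcal C_t^\alpha$ in every round. The adaptive epoch rule \eqref{eq:adaptive_epochs} and the iterative reduction of $E_i^t$ do not change this. By the monotonicity part of the Theorem, lowering $E_i^t$ can only lower $Carbon_i^t$, so the hypothesis remains satisfiable for the assigned workload.

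\textbf{Step 3 (stable participation level).} Let $N'\ge 1$ be the number of covered consumers. By Step 2, $|\mathcal C_t^\alpha|\ge N'$ for all $t$. Assuming the covered set is exactly the feasible set (the intended regime), $|\mathcal C_t^\alpha|=N'$ is constant. Then
\begin{equation*}
|S_t^\alpha|=K_t=\min(N',K_{target})=:K^*
\end{equation*}
is constant in $t$, so its limit trivially exists. Moreover $K^*\ge K_{min}\ge 1$ by the aggregation requirement, so $K^*>0$. If one only has $|\mathcal C_t^\alpha|\ge N'$, then $K_t\ge\min(N',K_{target})$ holds for all $t$. Convergence of $K_t$ then needs an extra argument. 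The sequence $K_t$ is integer-valued and bounded by $K_{target}$, and the set of consumers outside the covered set can only lose feasibility. From this one can argue that $|\mathcal C_t^\alpha|$ changes only finitely often, so $K_t$ is eventually constant with value at least $\min(N',K_{target})>0$.

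\textbf{Main obstacle.} The hard step is justifying that the hypothesis can hold uniformly in $t$. By Eq.~\eqref{eq:carbon}, even with $E_i^t=1$ the cost has a positive floor
\begin{equation*}
\Bigl(\tfrac{Cost(\alpha)}{\eta_i}+\rho_iP(\alpha)\Bigr)\tfrac{\min_t CI_i(t)}{3.6\times10^6},
\end{equation*}
which is strictly positive whenever $CI_i$ is bounded away from zero. Meanwhile $b_i(t)$ decreases by at least this floor each time $c_i$ participates. A consumer scheduled infinitely often would therefore eventually violate the hypothesis. I would first try to handle this by reading the lemma as a conditional statement, valid on the horizon $t\le\tau_{deadline}$ (or as long as the hypothesis holds). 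I would make explicit the finite-horizon bound
\begin{equation*}
B_i\ \ge\ \frac{\tau_{deadline}}{\lambda}\cdot\max_t Carbon_i^t(\alpha,1)
\end{equation*}
as a sufficient condition for the hypothesis to hold for all $t\le\tau_{deadline}$. On that horizon, Steps 1–3 go through verbatim and yield the stable level $K^*=\min(N',K_{target})\ge K_{min}>0$.
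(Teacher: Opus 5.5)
Your Step 1 is exactly the paper's proof. The paper writes $b_i(t)\ge(1-\lambda)^t b_i(0)$; your $(1-\lambda)^{t-1}B_i$ is the correctly indexed version, since $b_i(1)=B_i$. The paper then simply asserts that ``the scheduler maintains a stable, non-empty, feasible subset''. Steps 2--3 are your attempt to supply that missing step, and they do not go through:
\begin{itemize}
\item The hypothesis only constrains rounds in which $c_i$ is selected, so it does not place $c_i$ in $\mathcal C_t^\alpha$ in every round.
\item ``Covered set $=$ feasible set'' is an additional assumption.
\item $K^*\ge K_{min}$ is imported from the scheduler's requirement $K_t\ge K_{min}$ rather than derived.
\item The claim that consumers outside the covered set ``can only lose feasibility'' is false. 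An unscheduled consumer's budget is frozen while $CI_j(t)$ moves, so it can drop out of and re-enter $\mathcal C_t^\alpha$; this is exactly the spike scenario of the experiments.
\end{itemize}

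More fundamentally, your ``main obstacle'' is correct, and pushed through it refutes the statement rather than merely complicating it. Let $c_i=\bigl(Cost(\alpha)/\eta_i+\rho_iP(\alpha)\bigr)\inf_t CI_i(t)/(3.6\times10^6)>0$. Any scheduled consumer is feasible and pays at least $c_i$, so $c_i$ can be scheduled at most $B_i/c_i$ times in total, and at most $1+B_i/c_i-1/\lambda$ times under the hypothesis. With $N$ finite, the total number of participations is finite. A nonempty $\mathcal C_t^\alpha$ always yields $K_t\ge1$, so $\mathcal C_t^\alpha=S_t^\alpha=\emptyset$ for all large $t$ and $\lim_t|S_t^\alpha|=0$.

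Hence, on an infinite horizon, the lemma is either false (the limit is $0$) or vacuous (if $K_t\ge K_{min}\ge1$ is enforced, the hypothesis must fail at some finite $t$). In particular, your $N'\ge1$ consumers feasible in every round cannot exist. $K_t$ is indeed eventually constant, but the constant is $0$. Neither the paper's bound $(1-\lambda)^t b_i(0)\to0$ nor your positivity remark in Step 1 can rule this out, because $b_i(t)>0$ is not feasibility when the cost has the floor $c_i$.

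Your finite-horizon version, $B_i\ge(\tau_{deadline}/\lambda)\max_t Carbon_i^t(\alpha,1)$ together with ``covered $=$ feasible'', is a reasonable repaired statement: ``$K_t$ is constant for $t\le\tau_{deadline}$''. However, it is a different claim, since no limit exists on a finite horizon. It also needs the provider to reduce $E_i^t$ until $Carbon_i^t\le\lambda b_i(t)$, whereas the paper's rule only restores $Carbon_i^t\le b_i(t)$. Present it as a correction of the lemma, not a proof of it.
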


\begin{proof}
For each selected consumer, the remaining carbon budget evolves as
$b_i(t+1)=b_i(t)-Carbon_i^t(\alpha, E_i^t)$. Since $Carbon_i^t(\alpha,E_i^t)\le\lambda b_i(t)$. We obtain $b_i(t+1) \ge (1-\lambda)b_i(t)$. Recursively, $b_i(t)\ge (1-\lambda)^tb_i(0)$. Since $0<\lambda<1$, budget depletion occurs gradually rather than abruptly. The scheduler maintains a stable, non-empty, feasible subset, implying convergence to a stable participation level $K^*>0$.
\end{proof}

\subsection{Evolutionary Search Strategy under Sustainable Constraints}
We adopt an evolutionary search strategy over the feasible architecture space to optimise architecture performance, consumer feasibility, and participation coverage. This is suitable because the architecture variables are discrete, and the feasibility constraints are non-differentiable. Starting from an initial population sampled from $\mathcal A_{feasible}$, the provider evaluates candidate architecture, selects high-fitness architects as elite parents, and generates new offspring through mutation and crossover with green knobs. For $\alpha\in\mathcal P^g$, selected consumers perform local training, and aggregate updates using FedAvg \cite{nishio2019client}. We define the following
round-level utilities, such as predictive utility Eq.~\eqref{eq:accuracy_utility}, participation utility Eq.~\eqref{eq:participation_utility} and the carbon utility Eq.~\eqref{eq:carbon_utility} for optimasation.
\vspace{-2mm}
\begin{align}
U_{\mathrm{acc}}^t(\alpha)
&=
Acc^t(\alpha)
\label{eq:accuracy_utility}
\\
U_{\mathrm{part}}^t(\alpha)
&=
{|S_t^\alpha|}/{N}
\label{eq:participation_utility}
\\
U_{\mathrm{carbon}}^t(\alpha)
&=
\frac{1}{|S_t^\alpha|}
\sum_{i\in S_t^\alpha}
\frac{
b_i(t)
-
Carbon_i^t(\alpha,E_i^t)
}{
B_i+\epsilon
}
\label{eq:carbon_utility}
\end{align}
The round-level utilities are combined into a unified fitness function as
$U^t(\alpha)=\beta_1U_{\mathrm{acc}}^t(\alpha)+ \beta_2 U_{\mathrm{part}}^t(\alpha)+\beta_3 U_{\mathrm{carbon}}^t(\alpha)$, where $\beta_1,\beta_2,$ and $\beta_3$ denote non-negative utility weights satisfying $\beta_1+\beta_2+\beta_3=1$. The c\textit{umulative fitness} is represented as $Fitness(\alpha)=\frac{1}{T}\sum_{t=1}^{T} U^t(\alpha)$. Higher-fitness architectures are selected as \textit{elite parents}: $\mathcal E^g = TopM \left(\mathcal P^g, Fitness\right)$. The next generation is produced through mutation and crossover \cite{salmani2025systematic}, where mutation perturbs one or more green knobs, and crossover combines complementary design choices from elite parents as
\begin{equation}
    \mathcal P^{g+1}=\texttt{Mutate}\left(\mathcal E^g\right)\cup \texttt{Crossover}\left(\mathcal E^g\right) 
\end{equation}
After $G$ generations, the provider selects $\alpha^*=\arg\max_{\alpha\in\bigcup_g\mathcal P^g} Fitness(\alpha)$. The selected architecture $\alpha^*$ is deployed as the final FLaaS.

\begin{theorem}[Feasible Evolutionary Convergence]
Assume the feasible search space $\mathcal{A}_{\mathrm{feasible}}$ is finite and offspring violating $dCarbon(\alpha,R^{*}) \le B^{*}$ is discarded during offspring generation. If elite selection preserves the top-$M$ architectures ranked by fitness, every generation $\mathcal{P}^{g}$ remains feasible, and the search converges to a locally sustainable architecture $\lim_{g\rightarrow\infty} \max_{\alpha\in\mathcal{P}^{g}} Fitness(\alpha)=Fitness(\alpha^{*})$.
\end{theorem}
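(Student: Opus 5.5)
The plan has two parts: an invariant argument for feasibility, and a monotone-bounded-sequence argument for convergence. Throughout, Fitness is treated as a deterministic function of the architecture. This means the evaluation of $\alpha$ through the scheduling and training pipeline is fixed, for example by fixed seeds or by caching the value the first time $\alpha$ is evaluated. Without this, elitism cannot guarantee monotonicity, so the proof should state this as an explicit convention.

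\textbf{Feasibility.} I would prove that every generation is feasible by induction on $g$. The base case holds because $\mathcal P^0$ is sampled from $\mathcal A_{\mathrm{feasible}}$, defined in Eq.~\eqref{eq:feasible_search_space}. For the inductive step, assume $\mathcal P^g\subseteq\mathcal A_{\mathrm{feasible}}$. The elites satisfy $\mathcal E^g=\mathrm{TopM}(\mathcal P^g,\mathrm{Fitness})\subseteq\mathcal P^g$, so they are feasible. Offspring produced by \texttt{Mutate} or \texttt{Crossover} may leave the feasible region, but by hypothesis any offspring with $dCarbon(\alpha,R^*)>B^*$ is discarded. Under the stated elite-preservation assumption, $\mathcal E^g\subseteq\mathcal P^{g+1}$, and hence $\mathcal P^{g+1}\subseteq\mathcal A_{\mathrm{feasible}}$. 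I would also note that consumer-level feasibility inside each evaluation follows from the feasibility theorem for $\phi_i$ together with the scheduler, which only uses $\mathcal C_t^\alpha$. This part is not needed for the limit statement.

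\textbf{Convergence.} Define $f_g=\max_{\alpha\in\mathcal P^g}\mathrm{Fitness}(\alpha)$. Since the top-$M$ elites, with $M\ge1$, are carried into $\mathcal P^{g+1}$, the best architecture of generation $g$ is still present in generation $g+1$, so $f_{g+1}\ge f_g$. For boundedness, each $U^t$ is a convex combination of terms that are bounded: $U_{\mathrm{acc}}\in[0,1]$, $U_{\mathrm{part}}\in[0,1]$, and $U_{\mathrm{carbon}}\le 1$ because $b_i(t)\le B_i$. Finiteness of $\mathcal A_{\mathrm{feasible}}$ gives a stronger conclusion: $f_g$ takes values in the finite set $\{\mathrm{Fitness}(\alpha):\alpha\in\mathcal A_{\mathrm{feasible}}\}$. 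A nondecreasing sequence in a finite set is eventually constant. So there exist $g_0$ and $\alpha^*\in\bigcup_g\mathcal P^g$ with $f_g=\mathrm{Fitness}(\alpha^*)$ for all $g\ge g_0$, and this $\alpha^*$ coincides with the paper's $\arg\max_{\alpha\in\bigcup_g\mathcal P^g}\mathrm{Fitness}(\alpha)$. Therefore $\lim_{g\to\infty}f_g=\mathrm{Fitness}(\alpha^*)$.

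\textbf{Local optimality.} I would interpret ``locally sustainable'' as follows: $\alpha^*$ is feasible, and no feasible offspring generated from the stabilised elites has higher fitness. Otherwise that offspring would enter $\mathcal P^{g+1}$ and strictly increase $f_{g+1}$, contradicting stabilisation.

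\textbf{Main obstacle.} The hard step is justifying the determinism of Fitness and making precise that the elites survive into $\mathcal P^{g+1}$. As written, $\mathcal P^{g+1}=\texttt{Mutate}(\mathcal E^g)\cup\texttt{Crossover}(\mathcal E^g)$, which does not include $\mathcal E^g$ itself. I would resolve this by reading ``elite selection preserves the top-$M$'' as $\mathcal E^g\subseteq\mathcal P^{g+1}$, or equivalently by allowing identity mutation. The proof should state this reading explicitly. Global optimality is not claimed: the mutation operators need not reach every point of $\mathcal A_{\mathrm{feasible}}$, so only local optimality is asserted.
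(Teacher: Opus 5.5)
Your proposal is correct and follows the same route as the paper's proof. Discarding infeasible offspring keeps every $\mathcal P^{g}$ inside $\mathcal A_{\mathrm{feasible}}$, elite preservation makes $\max_{\alpha\in\mathcal P^{g}}\mathrm{Fitness}(\alpha)$ non-decreasing, and finiteness of $\mathcal A_{\mathrm{feasible}}$ gives convergence. Your additions make explicit what the paper's proof uses silently: Fitness is treated as a deterministic (cached) function of $\alpha$, elite preservation is read as $\mathcal E^{g}\subseteq\mathcal P^{g+1}$ (the displayed update rule omits $\mathcal E^{g}$), and finiteness yields eventual constancy rather than mere boundedness.
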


\begin{proof}
At generation $g$, architectures are sampled from the feasible search space. During offspring, mutation and crossover produce architectures violating the carbon-feasibility constraint $dCarbon(\alpha, R^{*}) \le B^{*}$. Such offspring
are discarded, ensuring $\mathcal{P}^{g+1} \subseteq \mathcal{A}_{\mathrm{feasible}}$. Elite selection preserves the top-$M$ architectures ranked by the fitness function. Let
$\alpha_i\in E^{g}$ denote an elite architecture and $\alpha_j\in\mathcal{P}^{g}\backslash E^{g}$ denote a non-elite architecture. Since elite architectures are preserved, the best fitness is non-decreasing: $\max_{\alpha\in\mathcal{P}^{g+1}}
Fitness(\alpha) \ge \max_{\alpha\in\mathcal{P}^{g}} Fitness(\alpha)$. Therefore, the fitness sequence is monotonically non-decreasing and bounded over the finite search space $\mathcal{A}_{\mathrm{feasible}}$, ensuring convergence to a locally sustainable architecture.
\end{proof}

\section{Experiment and Results}
A series of experiments is conducted to evaluate SFLaaS-NAS. We simulate a heterogeneous FLaaS with $N=20$ consumers over $T=20$ communication rounds, where ranges of parameters: $\eta_i \in [2.0, 5.0]$FLOPs/J, $\rho_i \in [0.5, 1.5]$J/parameter, $B_i\in[800, 2000] gCO_2$. The minimum aggregation requirement is fixed as $K_{\min}=5$ for stable aggregation. All datasets use a non-IID Dirichlet split ($\alpha=0.5$). CIFAR-10 \cite{cifar10} and ImageNet- 100\cite{imagenet_kaggle} represent vision tasks with increasing complexity, MedMNIST\cite{pathmnist} captures heterogeneous medical data across institutions, and WISDM \cite{wisdm2019} models sensor-driven applications in mobile environments. We design search space with green knobs, $\{d= 2,3,4,5\}$, $\{width=16,32,64\}$, $\{kernal size= 3,5\}$, operator type (standardand  depthwise convolution), $\{s= 0,0.25,0.5\}$, and $\{p=8,16,32\}$ bits, resulting in $|\mathcal{A}|=432$ candidate architectures. Experiments are implemented in Python on an AMD Ryzen 7 processor with AMD Radeon(TM) Graphics. The implementation and experimental code are available at: \footnote{\url{https://github.com/keyadata/SFLaaS-NAS}}

\subsection{Baselines}
We compare SFLaaS-NAS with several baselines that do not consider consumer-level sustainable constraints and evaluated using the same configurations.

\textbf{1. Accuracy-driven NAS}\cite{liu2019darts}: A conventional NAS approach that optimises only predictive accuracy on validation data.

\textbf{2. Efficiency-Aware NAS}\cite{cai2019proxylessnas}: This approach jointly considers predictive performance and model-complexity metrics, such as FLOPs and parameter count.

\textbf{3. Federated NAS}\cite{he2021fednas}: This strategy performs NAS within an FL setting and optimises model accuracy under non-IID data.

\textbf{4. Global Carbon-Aware NAS}\cite{zhou2020econas}: This approach selects an architecture based on overall energy or carbon efficiency.

\subsection{Evaluation Metrics}
We have employed the following metrics. \textit{Accuracy}, which measures the test accuracy of FLaaS after T rounds. \textit{Participation Rate} and \textit{Carbon Violation Rate}, which quantify the average fraction of consumers participating and proportion of selected consumers whose carbon cost exceeds their remaining carbon budget, respectively.
\vspace{-2mm}
\[
\text{Participation Rate}
=
\frac{1}{T}\sum_{t=1}^{T}\frac{|S_t|}{N}
\]

\[
\text{Carbon Violation Rate}
=
\frac{\sum_{t=1}^{T}\sum_{i\in S_t}
\mathbb{I}\!\left[\text{Carbon}_i^t(\alpha,E_i^t) > b_i(t)\right]}
{\sum_{t=1}^{T}|S_t|}
\]
We use separate validation and test partitions and evaluate across $5$ random seeds. Validation data select the fitness weights $\beta_1$--$\beta_3$ and scheduling weights $\lambda_1$--$\lambda_3$, while final performance is evaluated on unseen test partitions.
\subsection{Performance and Feasibility Analysis}
\textbf{Performance Evaluation under Carbon Constraints:}
We evaluate the effectiveness of SFLaaS-NAS in sustainable constraints across multiple datasets. The results in Table~\ref{tab:carbon_results} show that Accuracy-Driven NAS achieves strong performance but suffers from low participation (50-60\%) and high violation rates (25-32\%), limiting practicality under carbon constraints. Efficiency-Aware NAS improves participation (65-75\%) and reduces violations, but it lacks strict constraint enforcement, leading to non-negligible violations. Federated NAS balances accuracy and efficiency but shows limited participation (63-73\%) and notable violations (13-18\%), reducing suitability for carbon-intensive environments. Global Carbon-Aware NAS provides a better balance, achieving 70-80\% and lower violation rates without consumer heterogeneity. SFLaaS-NAS achieves high participation (around 90\%) and very low violation rates (1-2\%) across all datasets, demonstrating effective sustainable FLaaS orchestration under sustainability constraints.

\begin{table*}[t]
\caption{Performance Comparison under Carbon Constraints across Datasets}
\label{tab:carbon_results}
\resizebox{0.95\textwidth}{!}{%
\footnotesize
\renewcommand{\arraystretch}{0.85}
\setlength{\tabcolsep}{4pt}
\begin{tabular}{ccccc}
\hline
{\textbf{Dataset}} &
{\textbf{Method}} &
\textbf{Accuracy (\%)} &
\textbf{Participation (\%)} &
\textbf{Violation (\%)} \\
\hline

&
Accuracy-Driven NAS
& 85.8 $\pm$ 0.8
& 58.3 $\pm$ 2.6
& 26.7 $\pm$ 2.1 \\

&
Efficiency-Aware NAS
& 83.9 $\pm$ 0.9
& 74.6 $\pm$ 2.0
& 12.4 $\pm$ 1.7 \\

&
Federated NAS
& 85.1 $\pm$ 0.7
& 69.2 $\pm$ 2.4
& 15.8 $\pm$ 1.5 \\

{CIFAR-10} &
Global Carbon-Aware NAS
& 84.3 $\pm$ 0.6
& 78.5 $\pm$ 1.8
& 9.6 $\pm$ 1.2 \\

&
\textbf{SFLaaS-NAS}
& \textbf{86.4 $\pm$ 0.5}
& \textbf{94.8 $\pm$ 1.6}
& \textbf{1.2 $\pm$ 0.7} \\
\hline

&
Accuracy-Driven NAS
& 92.9 $\pm$ 0.6
& 61.2 $\pm$ 2.4
& 24.8 $\pm$ 1.8 \\

&
Efficiency-Aware NAS
& 91.3 $\pm$ 0.5
& 77.8 $\pm$ 1.9
& 10.9 $\pm$ 1.3 \\

&
Federated NAS
& 92.1 $\pm$ 0.6
& 73.5 $\pm$ 2.2
& 13.6 $\pm$ 1.4 \\

{WISDM} &
Global Carbon-Aware NAS
& 91.6 $\pm$ 0.4
& 81.3 $\pm$ 1.5
& 8.2 $\pm$ 1.0 \\

&
\textbf{SFLaaS-NAS}
& \textbf{93.7 $\pm$ 0.4}
& \textbf{96.1 $\pm$ 1.3}
& \textbf{0.9 $\pm$ 0.5} \\
\hline

&
Accuracy-Driven NAS
& 84.2 $\pm$ 0.7
& 56.8 $\pm$ 2.8
& 28.1 $\pm$ 2.2 \\

&
Efficiency-Aware NAS
& 82.7 $\pm$ 0.8
& 72.4 $\pm$ 2.1
& 13.2 $\pm$ 1.6 \\

&
Federated NAS
& 83.5 $\pm$ 0.6
& 68.7 $\pm$ 2.0
& 16.5 $\pm$ 1.5 \\

{MedMNIST} &
Global Carbon-Aware NAS
& 83.0 $\pm$ 0.5
& 76.9 $\pm$ 1.8
& 10.4 $\pm$ 1.1 \\

&
\textbf{SFLaaS-NAS}
& \textbf{83.6 $\pm$ 0.5}
& \textbf{80.0 $\pm$ 1.7}
& \textbf{1.0 $\pm$ 0.6} \\
\hline

&
Accuracy-Driven NAS
& 77.2 $\pm$ 1.1
& 49.5 $\pm$ 3.0
& 31.8 $\pm$ 2.4 \\

&
Efficiency-Aware NAS
& 75.1 $\pm$ 0.9
& 66.2 $\pm$ 2.5
& 16.1 $\pm$ 1.9 \\

&
Federated NAS
& 76.5 $\pm$ 0.8
& 63.8 $\pm$ 2.7
& 18.4 $\pm$ 1.8 \\

{ImageNet} &
Global Carbon-Aware NAS
& 75.8 $\pm$ 0.7
& 72.1 $\pm$ 2.1
& 12.3 $\pm$ 1.5 \\

&
\textbf{SFLaaS-NAS}
& \textbf{78.1 $\pm$ 0.6}
& \textbf{91.2 $\pm$ 1.9}
& \textbf{2.5 $\pm$ 0.8} \\
\hline
\end{tabular}
}
\end{table*}

\textbf{Feasibility Filtering Analysis:}
We evaluate the contribution of the feasibility filtering mechanism using CIFAR-10 by removing the feasibility-aware filtering condition in Eq.~\eqref{eq:feasible_search_space}, allowing search over the full architecture space. The results in Table~\ref{tab:feasibility_ablation} show that removing feasibility filtering reduces participation from 94.8\% to 82.1\% while increasing the carbon violation rate from 1.2\% to 11.4\%. This occurs because higher-cost architectures become infeasible under a strict carbon budget. In contrast, feasibility-aware filtering improves participation stability and reduces carbon violations in FLaaS.
\begin{table}[t]
\centering
\caption{Impact of Feasibility-Aware Search Space Reduction}
\label{tab:feasibility_ablation}
\footnotesize
\begin{tabular}{lcc}
\hline
\textbf{Method} & \textbf{Participation} (\%) & \textbf{Violation} (\%) \\
\hline
Without Feasibility Filtering
& 82.1 $\pm$ 2.3
& 11.4 $\pm$ 1.4 \\
SFLaaS-NAS
& \textbf{94.8 $\pm$ 1.6}
& \textbf{1.2 $\pm$ 0.7} \\
\hline
\end{tabular}
\end{table}

\begin{figure*}[t]
\vspace{-2mm}
\centering
\includegraphics[width=\textwidth]{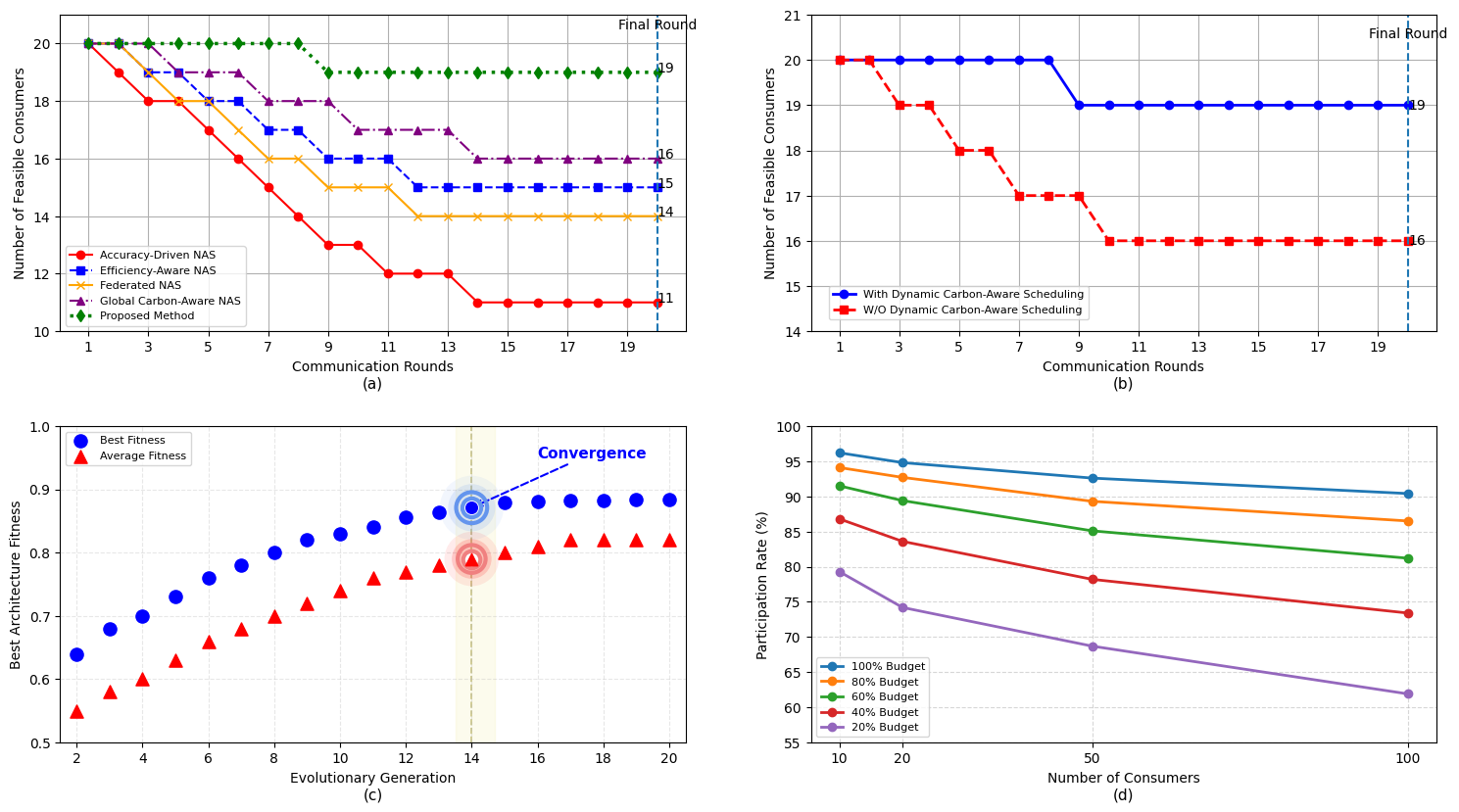}
\caption{(a) Consumer Feasibility Analysis, (b) Impact of Dynamic Scheduling, (c) Evolutionary Search Convergence, and (d) Scalability Analysis}
\label{fig:combined_results}
\vspace{-2mm}
\end{figure*}

\subsection{Consumer Feasibility and Scheduling Analysis}
\textbf{Consumer Feasibility Analysis:}
We further evaluate consumer feasibility and participation coverage using the CIFAR-10 dataset. Fig.~\ref{fig:combined_results}(a) shows the number of feasible consumers at the first and final round. All methods start with 20 consumers in Round 1, indicating that all consumers initially satisfy the carbon constraint. However, baseline methods show different levels of feasibility degradation by the final round. Accuracy-Driven NAS drops sharply from 20 to 11 consumers due to the selection of high-complexity architectures that quickly consume carbon budgets. Efficiency-Aware NAS reduces this degradation and retains 15 consumers, but does not enforce consumer-level carbon feasibility. Global Carbon-Aware NAS retains 16 consumers, but cannot handle heterogeneous consumer budgets. Interestingly, SFLaaS-NAS maintains 19 out of 20 feasible consumers in the final round, showing effective preservation of consumer participation by selecting architectures that satisfy consumer-level carbon constraints. SFLaaS-NAS not only improves final performance but also supports sustainable, stable FLaaS execution.

\textbf{Dymanic Scheduling Analysis:}
To evaluate adaptive consumer selection, we analyse the contribution of the dynamic carbon-aware scheduler in SFLaaS-NAS using CIFAR-10. We compare against a variant without dynamic carbon-aware scheduling, in which consumer selection does not adapt to round-wise carbon budgets. Fig.~\ref{fig:combined_results}(b) shows that, without dynamic scheduling, consumer feasibility degrades more rapidly, reducing consumers from 20 to 16 due to uncontrolled carbon budget consumption. In contrast, SFLaaS-NAS maintains stable participation (19/20) by selecting consumers based on their remaining carbon budgets. Removing dynamic scheduling also lowers participation 82.6\% vs. 94.8\%) and increases carbon violations (6.8\% vs. 1.2\%). These results show that dynamic scheduling plays a critical role in maintaining consumer feasibility and stable FL under carbon constraints.

\subsection{Search Convergence and Scalability Analysis}
\textbf{Evolutionary Search Convergence:}
We evaluate the convergence behaviour of the proposed evolutionary search on MedMNIST using $G=20$ generations, population size $|\mathcal P^g|=30$, elite size $M=10$, mutation probability $0.2$, and crossover probability $0.8$. For each generation, we record the best fitness architecture. The results in Fig.~\ref{fig:combined_results}(c) show that the best architecture fitness increases steadily from 0.64 to 0.88 by generation 14, after which the search gradually stabilises. Furthermore, no infeasible offspring are retained, indicating that all candidate architectures remain consumer-feasible across generations. Also, the monotonic improvement in fitness validates the effectiveness of elite preservation. These results confirm effective convergence toward stable carbon-feasible architectures.

\textbf{Scalability under Hard Carbon Constraints:}
To evaluate the scalability of SFLaaS-NAS on ImageNet-100, we increase the number of consumers from 10 to 100, while scaling each consumer's carbon budget $B_i$ to $100\%$, $80\%$, $60\%$, $40\%$, and $20\%$ of its original value. The results in Fig.~\ref{fig:combined_results}(d) show that participation gradually decreases as the number of consumers increases and carbon budgets become stricter due to greater consumer heterogeneity and reduced allowable training workloads. Nevertheless, SFLaaS-NAS maintains stable participation through carbon-feasible architecture selection and adaptive consumer scheduling. Even with $N=100$ and a $20\%$ carbon budget, SFLaaS-NAS sustains over $60\%$ participation, demonstrating strong scalability in large-scale FLaaS.
\begin{figure*}[t]
\vspace{-2mm}
\centering
\includegraphics[width=\textwidth]{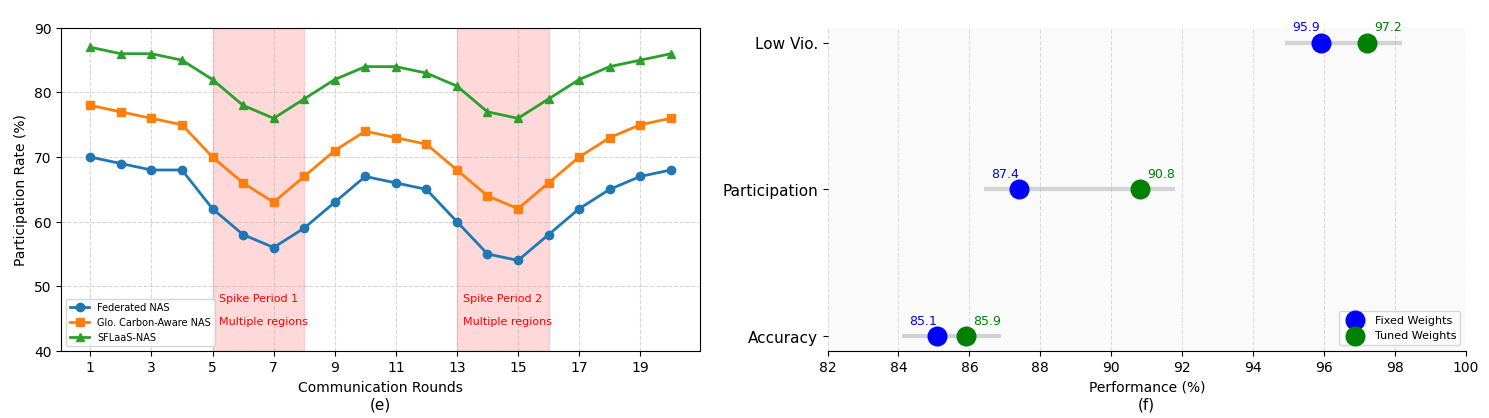}
\caption{(e) Robustness under Correlated Carbon-Intensity Spikes, and (f) Sensitivity Analysis under Fixed and Tuned Weight Configurations.}
\label{fig:e_f}
\vspace{-2mm}
\end{figure*}
\subsection{Robustness and Sensitivity Analysis}
\textbf{Robustness under Correlated Carbon-Intensity Spikes:}
We evaluate the robustness of SFLaaS-NAS under correlated regional carbon-intensity spikes using hourly Electricity Maps \cite{electricitymaps} based on geographic locations. We model synchronised high-carbon periods to evaluate participation stability under correlated regional carbon-intensity spikes, as shown in Fig.~\ref{fig:e_f}(e). During spike periods, Federated NAS drop to around 55\% due to computationally intensive architectures violating carbon constraints under elevated regional carbon intensity. Global Carbon-Aware NAS improves participation to 62-66\% during spike intervals by reducing average carbon consumption. In contrast, SFLaaS-NAS maintains participation above 75\% across most spike periods through carbon-feasible architecture selection and adaptive consumer scheduling, demonstrating stronger robustness under correlated carbon-constrained conditions. The results support Lemma~1, as stable participation and low violation rates indicate bounded consumer-side carbon expenditure.

\textbf{Sensitivity to Weight Selection:}
We evaluate whether the performance gains of SFLaaS-NAS arise from the proposed methodology or hyperparameter tuning. Specifically, we compare the tuned configuration against a fixed-weight setting with uniform weights, where $\beta=(1/3,1/3,1/3)$ and $\lambda=(1/3,1/3,1/3)$. The results in Fig.~\ref{fig:e_f}(f) show that SFLaaS-NAS maintains strong performance under fixed-weight, achieving 85.1\% accuracy, 87.4\% participation, and only 4.1\% carbon-violation rate. Validation-based tuning further improves participation to 90.8\% while reducing violations to 2.8\%. However, the small performance gap between the fixed and tuned configurations indicates that improvements come from the proposed feasible search space, carbon estimation, and dynamic scheduling rather than from extensive hyperparameter tuning alone.

\section{Discussion}
Experimental results show that SFLaaS-NAS improves consumer participation while reducing carbon-constraint violations by integrating consumer-level feasibility estimation, dynamic scheduling, and evolutionary NAS. However, one key limitation of SFLaaS-NAS is that the proposed evolutionary NAS effectively discovers carbon-feasible architectures; it introduces additional computational overhead during the search phase. We focused on the feasibility of sustainable architecture rather than minimising NAS search cost. Investigating lightweight or low-cost search strategies could improve the practicality of large-scale FLaaS deployment. Moreover, the current framework assumes a single FLaaS provider. In large-scale practical deployments, multiple providers operate with different infrastructure capabilities and sustainability goals. Such scenarios introduce additional challenges in coordination, resource allocation, fairness, and carbon-aware scheduling across providers. Extending the framework toward distributed multi-provider FLaaS remains an important direction for future work.
\section{Conclusion}
The proposed SFLaaS-NAS framework enables sustainable architecture design for FL under consumer-level carbon constraints. By integrating a feasibility-aware search space, consumer-level carbon estimation, and dynamic carbon-aware scheduling, the framework selects architectures that maintain high participation while satisfying carbon budgets. Experimental results show that SFLaaS-NAS sustains over 90\% participation with near-zero violations across multiple datasets, demonstrating effectiveness in heterogeneous environments. Future work will investigate lightweight sustainable NAS strategies and distributed multi-provider FLaaS environments to further improve scalability and efficiency.

%
%

\footnotesize
\begin{thebibliography}{8}
\bibitem{wen2023survey} Wen, J. et al.: A survey on federated learning: challenges and applications. Int. J. Mach. Learn. Cybern. 14(2), 513--535 (2023)

\bibitem{gao2024flaas} Gao, W. et al.: FLaaS for hierarchical edge networks with heterogeneous models. In: ICSOC. Springer (2024)

\bibitem{kourtellis2020flaas} Kourtellis, N. et al.: Flaas: Federated learning as a service. In: 1st Workshop on Dist.ML (2020).

\bibitem{albelaihi2022green} Albelaihi, R. et al.: Green federated learning via energy-aware client selection. In: IEEE GLOBECOM (2022).

\bibitem{salmani2025systematic} Salmani Pour Avval, S. et al.: Systematic review on neural architecture search. Artif. Intell. Rev. 58(3), 73 (2025)

\bibitem{li2021fedgreen} Li, P., Huang, X., Pan, M., Yu, R.: FedGreen: Federated learning with fine-grained gradient compression for green mobile edge computing. In: GLOBECOM (2021).

\bibitem{arputharaj2025green} Arputharaj, D. et al.: Green Federated Learning via Carbon-Aware Client and Time Slot Scheduling. In: MASCOTS (2025).

\bibitem{xu2021knas} Xu, J. et al.: KNAS: Green neural architecture search. In: ICML, PMLR (2021)

\bibitem{zhou2020econas} Zhou, D. et al.: 
EcoNAS: Finding proxies for economical neural architecture search. In: Proceedings of the IEEE/CVF (CVPR), pp. 11396--11404 (2020)

\bibitem{mazzocca2023truflaas} Mazzocca, C. et al.: TruFLaaS: Trustworthy federated learning as a service.  IEEE IoT J. 10(24), 21266--21281 (2023)

\bibitem{gao2024nebulafl} Gao, F. et al.: 
NebulaFL: Effective asynchronous federated learning for joint cloud computing. arXiv:2412.04868 (2024)

\bibitem{pathmnist} Kather, J. et al.: Predicting Survival from Colorectal Cancer Histology Slides Using DL: A Retrospective Multicenter Study. \textit{PLOS Medicine} (2019).

\bibitem{nishio2019client} Nishio, T., Yonetani, R.: Client selection for federated learning with heterogeneous resources in mobile edge. In:(ICC), pp. 1--7. IEEE (2019)

\bibitem{liu2019darts} Liu, H. et al.: Darts: Differentiable architecture search. In: ICLR (2019).

\bibitem{cai2019proxylessnas} Cai, H. et al.:  ProxylessNAS: Direct neural architecture search on target task and hardware. In: ICLR (2019)

\bibitem{cifar10} Krizhevsky, A.: Learning Multiple Layers of Features from Tiny Images. Technical Report, University of Toronto (2009).

\bibitem{wisdm2019} Weiss, G.: WISDM Smartphone and Smartwatch Activity and Biometrics Dataset. UCI ML Repository (2019). \doi{10.24432/C5HK59}

\bibitem{imagenet_kaggle} Howard, A. et al.: 
ImageNet object localisation challenge. 
Kaggle (2018). \url{https://kaggle.com/competitions/imagenet-object-localization-challenge}

\bibitem{lai2021oort} Lai, F. et al.: 
Oort: Efficient federated learning via guided participant selection. In: 15th USENIX Symposium on OSDI, pp. 19--35 (2021)

\bibitem{tan2019mnasnet} Tan, M. et al.: 
MnasNet: Platform-aware neural architecture search for mobile. In: Proceedings of the IEEE/CVF Conference on CVPR, pp. 2820--2828 (2019)

\bibitem{wu2019fbnet} Wu, B. et al.: 
FBNet: Hardware-aware efficient ConvNet design via differentiable neural architecture search. 
In: Proceedings of the IEEE/CVF (CVPR), (2019)

\bibitem{dou2023ea} Dou, S. et al.: 
EA-HAS-Bench: Energy-aware hyperparameter and architecture search benchmark. In: The Eleventh (ICLR) (2023)

\bibitem{anthony2020carbontracker} Anthony, L.F.W. et al.: CarbonTracker: Tracking and predicting the carbon footprint of training deep learning models. arXiv:2007.03051 (2020)

\bibitem{he2021fednas} He, C. et al.: 
FedNAS: Federated deep learning via neural architecture search (2021)

\bibitem{electricitymaps} Electricity Maps: Live carbon intensity of electricity consumption.https://app.electricitymaps.com/

\end{thebibliography}
%

\end{document}